\newtheorem{theorem}{Theorem}[section]
\newtheorem{corollary}[theorem]{Corollary}
\newtheorem{lemma}[theorem]{Lemma}
\providecommand{\N}{{\ensuremath{\mathbb{N}}}}
\providecommand{\R}{{\ensuremath{\mathbb{R}}}}
\providecommand{\1}{{\ensuremath{\mathbbm{1}}}}
\def\r#1{{\ensuremath{\mathcal{R}\hspace{-0.1em}{#1}}}}
\newcommand{\eps}{\varepsilon}
\renewcommand{\ae}{a.e.\@ }
\DeclareMathOperator{\ReLU}{ReLU}
\newcommand{\nder}{{\ensuremath{\mathcal{D}}}}
\newcommand{\der}{{\ensuremath{D}}}
\begin{document}
\title{Towards a regularity theory for {ReLU} networks -- chain rule and global error estimates}
\author{
\IEEEauthorblockN{Julius Berner\IEEEauthorrefmark{1}, Dennis Elbr\"achter\IEEEauthorrefmark{1}, Philipp Grohs\IEEEauthorrefmark{3}, Arnulf Jentzen\IEEEauthorrefmark{4}}
\IEEEauthorblockA{\IEEEauthorrefmark{1}Faculty of Mathematics, University of Vienna\\Oskar-Morgenstern-Platz 1, 1090 Vienna, Austria\\}
\IEEEauthorblockA{\IEEEauthorrefmark{3}Faculty of Mathematics and Research Platform DataScience@UniVienna,
University of Vienna \\ Oskar-Morgenstern-Platz 1, 1090 Vienna, Austria\\}
\IEEEauthorblockA{\IEEEauthorrefmark{4}Department of Mathematics, ETH Z\"urich \\ R\"amistrasse 101, 8092 Z\"urich, Switzerland\\}
}

\maketitle
\begin{abstract}
Although for neural networks with locally Lipschitz continuous activation functions the classical derivative exists almost everywhere, the standard chain rule is in general not applicable.
We will consider a way of introducing a derivative for neural networks that admits a chain rule, which is both rigorous and easy to work with. In addition we will present a method of converting approximation results on bounded domains to global (pointwise) estimates. 
This can be used to extend known neural network approximation theory to include the study of regularity properties. 
Of particular interest is the application to neural networks with ReLU activation function, 
where it contributes to the understanding of the success 
of deep learning methods for high-dimensional partial differential equations.
\end{abstract}

\IEEEpeerreviewmaketitle
\section{Introduction}
It has been observed that deep neural networks exhibit the remarkable capability of overcoming the curse of dimensionality in a number of different scenarios. In particular, for certain types of high-dimensional partial differential equations (PDEs) there are promising empirical observations~\cite{beckbecker2018,E2017,2017Jentzen,SirignanoSpiliopoulos2017,FujiiTakahashiTakahashi2017,KhooLuYing2017,EYu2017} backed by theoretical results for both the approximation error~\cite{Elbrachter2018DNNPricing,grohs2018approx,Jentzen2018ACoefficients,Hutzen2019} as well as the generalization error~\cite{Berner2018AnalysisEquations}.
In this context it becomes relevant to not only show how well a given function of interest can be approximated by neural networks but also to extend the study to the derivative of this function. 
A number of recent publications~\cite{Yarotsky2018,Petersen2017,Grohs2019DeepTheory} have investigated the required size of a network 
which is sufficient to approximate certain interesting (classes of) functions within a given accuracy. 
This is achieved, first, by considering the approximation of basic functions by very simple networks 
and, subsequently, by combining those networks in order to approximate more difficult structures. 
To extend this approach to include the regularity of the approximation, one requires some kind of chain rule for the composition of neural networks.
For neural networks with differentiable activation function the standard chain rule is sufficient.  
It, however, fails when considering neural networks with an activation function, which is not everywhere differentiable.
Although locally Lipschitz continuous functions are w.r.t the Lebesgue measure almost everywhere (a.e.) 
differentiable, the standard chain rule is not applicable, as, in general, 
it does not hold even in an 'almost everywhere' sense.
We will introduce derivatives of neural networks in a way that admits a chain rule which is both rigorous as well as easy to work with.
Chain rules for functions which are not everywhere differentiable have been considered in a more general setting in e.g. \cite{murat2003chain,ambrosio1990general}. 
We employ the specific structure of neural networks to get stronger results using simpler arguments. In particular it allows for a stability result, i.e. Lemma~\ref{lem:stab}, the application of which will be discussed in Section V.
We would also like to mention a very recent work \cite{guhring2019error} about approximation in Sobolev norms, where they deal with the issue by using a general bound for the Sobolev norm of the composition of functions from the Sobolev space $W^{1,\infty}$. Note however that this approach leads to a certain factor depending on the dimensions of the domains of the functions, which can be avoided with our method.
For ease of exposition, we formulate our results for neural networks with the ReLU activation function. We, however, consider in Section IV how such a chain rule can be obtained for any activation function which is locally Lipschitz continuous (with at most countably many points at which it is not differentiable). 
In Section V we briefly sketch how the results from Section III can be utilized to get approximation results for certain classes of functions. Subsequently, in Section VI, we present a general method of deriving global error estimates from such approximation results, which are naturally obtained for bounded domains.
Ultimately, we discuss how our results can be used to extend known theory, enabling the further study of the approximation of PDE solutions by neural networks.

\section{Setting}
As in~\cite{Petersen2017}, we consider a neural network $\Phi$ to be a finite sequence of matrix-vector pairs, i.e.
\begin{align}\label{eq:NNdef}
    \Phi=((A_k,b_k))_{k=1}^L,
\end{align}
where $A_k\in\R^{N_k\times N_{k-1}}$ and $b_k\in\R^{N_k}$ for some depth $L\in\N$ and layer dimensions $N_0,N_1,\dots,N_L\in\N$. 
The realization of the neural network $\Phi$ is the function $\r\Phi\colon \R^{N_0}\to\R^{N_L}$ given by
\begin{align}\label{eq:Rdef}
    \r\Phi= W_L\circ\ReLU\circ\,W_{L-1}\circ\ldots\circ\ReLU\circ\,W_1,
\end{align}
where 
$W_k(x)=A_k x + b_k$ for every $x\in\R^{N_k}$ and where
\begin{align}
    \ReLU(x):=(\max\{0,x_1\},\dots,\max\{0,x_N\})
\end{align}
for every $x\in\R^N$. We distinguish between a neural network and its realization, since $\Phi$ uniquely induces $\r\Phi$, while in general there can be multiple non-trivially different neural networks with the same realization.
The representation of a neural network as a structured set of weights as in \eqref{eq:NNdef} allows the introduction of notions of network sizes. 
While there are slight differences between various publications, commonly considered quantities are the depth (i.e. number of affine transformations), the connectivity (i.e. number of non-zero entries of the $A_k$ and $b_k$), and the weight bound (i.e. maximum of the absolute values of the entries of the $A_k$ and $b_k$). 
In \cite{Grohs2019DeepTheory} it has been shown that these three quantities determine the length of a bit string which is sufficient to encode the network with a prescribed quantization error.
In the following let 
\begin{align}
\Phi\!=\!((A_k,b_k))_{k=1}^{L}, \quad\Psi\!=\!((\tilde{A}_k,\tilde{b}_k))_{k=1}^{\tilde{L}}   
\end{align}
be neural networks with matching dimensions in the sense that ${\r\Phi\colon\R^d\to\R^m}$ and ${\r\Psi\colon\R^m\to\R^n}$.
We then define their composition as
\begin{align}\begin{split}\label{eq:concDef}
    &\Psi\odot\Phi:=\\
    &\quad\big(((A_k,b_k))_{k=1}^{L-1},(\tilde{A}_1A_{L},\tilde{A}_1 b_{L}+\tilde{b}_1),((\tilde{A}_k,\tilde{b}_k))_{k=2}^{\tilde{L}}\big).
\end{split}\end{align}
Direct computation shows 
\begin{align}\label{eq:compRealizations}
\r(\Psi\odot\Phi)=\r\Psi\circ\r\Phi.
\end{align}
Note that the realization $\r\Phi$ of a neural network $\Phi$ is continuous piecewise linear (CPL) as a composition of CPL functions. Consequently, it is Lipschitz continuous 
and the realization $\r\Phi$ is almost everywhere differentiable by Rademacher's theorem. In particular all three functions in \eqref{eq:compRealizations} are \ae differentiable. This, however, is not sufficient to get the derivative of $\r(\Psi\odot\Phi)$ from the derivatives of $\r\Psi$ and $\r\Phi$ by use of the classical chain rule. Consider the very simple counterexample of $u(x):=\ReLU(x)$ and $v(x):=0$ and formally apply the chain rule, i.e. 
\begin{align}\label{eq:normalCR}
( \der(u\circ v) )(x)=( \der u )(v(x))\cdot ( \der v )(x).
\end{align}
Even though $ (\der u)(y)$ is well-defined for every $y\in\R\backslash\{0\}$, 
the expression $ (\der u )(v(x))$ is defined for no $x\in\R$. In general this problem occurs when the inner function maps a set of positive measure into a set where the derivative of the outer function does not exist. Now in this case, one can directly see that setting $(\der u)(0)$ to any arbitrary value would cause \eqref{eq:normalCR} to provide the correct result since $(\der v)(x)=0$. 

\section{ReLU network derivative}
We proceed by defining the derivative of an arbitrary neural network in a way such that it 
not only coincides \ae with the derivative of the realization, but also admits a chain rule. 
To this end let $H\colon \R^N\to\R^{N\times N}$ be the function given by
\begin{align}\label{eq:Hdef}
    H(x):=\text{diag}(\1_{(0,\infty)}(x_1),\dots,\1_{(0,\infty)}(x_N))
\end{align}
for every $x=(x_1,\dots,x_N)\in\R^N$ 
and let $\mathcal{R}_K\Phi:=\r((A_k,b_k))_{k=1}^{K}$. We then define the neural network derivative of $\Phi$ as the function $\nder\Phi\colon\R^{N_0}\to\R^{N_L\times N_0}$ given by
\begin{align}\label{eq:NNderDef}
    \nder\Phi:=A_L\cdot H(\mathcal{R}_{L-1}\Phi)\cdot A_{L-1}\cdot\ldots\cdot H(\mathcal{R}_1\Phi)\cdot A_1.
\end{align}
Note that this definition is motivated by formally applying the chain rule with the convention that the derivative of $\max\{0,\,\cdot\,\}$ is zero at the origin. 
Now we need to verify that this is justified.
\begin{theorem}\label{thm:main}
It holds for almost every $x\in\R^d$ that 
\begin{align}
    (\nder\Phi)(x)=( \der(\r\Phi) )(x).
\end{align}
\end{theorem}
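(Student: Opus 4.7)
The natural strategy is induction on the depth $L$. The base case $L=1$ is trivial: since $\r\Phi = W_1$ is affine, $(\der(\r\Phi))(x) = A_1 = (\nder\Phi)(x)$ holds everywhere. For the inductive step, let $\Phi' := ((A_k, b_k))_{k=1}^{L-1}$ and set $f := \r\Phi'$. Observing that $\mathcal{R}_{L-1}\Phi = \r\Phi'$ and peeling off the last layer in both definitions yields
\[
  \r\Phi(x) = A_L \ReLU(f(x)) + b_L \quad \text{and} \quad (\nder\Phi)(x) = A_L \cdot H(f(x)) \cdot (\nder\Phi')(x).
\]
Since $f$ is continuous piecewise linear and hence Lipschitz, $\der f$ exists almost everywhere by Rademacher's theorem, and the inductive hypothesis provides $(\nder\Phi')(x) = (\der f)(x)$ almost everywhere. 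The theorem thus reduces to the one-layer identity
\[
  (\der(\ReLU \circ f))(x) = H(f(x)) \cdot (\der f)(x) \quad \text{for a.e.\ } x \in \R^d.
\]

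I would verify this identity componentwise. Fix $i \in \{1, \dots, N_{L-1}\}$ and partition $\R^d$ into the three sets $\Omega_i^+ := \{x : f_i(x) > 0\}$, $\Omega_i^- := \{x : f_i(x) < 0\}$, and $\Omega_i^0 := \{x : f_i(x) = 0\}$. On the open set $\Omega_i^+$, continuity of $f_i$ gives $(\ReLU \circ f)_i = f_i$ on a neighbourhood of each point, so wherever $\der f_i$ exists the $i$-th rows of both sides agree (with $H(f(x))_{ii} = 1$). On the open set $\Omega_i^-$, $(\ReLU \circ f)_i$ vanishes locally so its derivative is zero, matching $H(f(x))_{ii} = 0$ on the right-hand side.

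The main obstacle is $\Omega_i^0$: this is precisely where the formal chain rule can fail, as the $(u,v)$-counterexample in Section II already illustrates. The key ingredient here is the standard measure-theoretic fact that if $h \colon \R^d \to \R$ is Lipschitz, then $(\der h)(x) = 0$ at almost every $x \in \{h = 0\}$; indeed, at any Lebesgue density point of $\{h=0\}$ where $h$ is differentiable, a nonzero differential would force $h$ to be bounded away from zero on a cone of positive density, contradicting the density property. Applied to $h = f_i$, this gives that the $i$-th row of $\der f$ vanishes almost everywhere on $\Omega_i^0$; applied to $h = (\ReLU \circ f)_i$, which vanishes on all of $\Omega_i^- \cup \Omega_i^0$, it gives that the $i$-th row of $\der(\ReLU \circ f)$ vanishes almost everywhere on $\Omega_i^0$. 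Since $H(f(x))_{ii} = 0$ on $\Omega_i^0$ by definition, both sides of the target identity vanish almost everywhere there. Taking a union over $i$ of the resulting null sets completes the inductive step and hence the proof.
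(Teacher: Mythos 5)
Your proposal is correct and follows essentially the same route as the paper: induction over the layers, reducing to the one-layer identity $\der(\ReLU\circ f)=H(f)\cdot\der f$ a.e., with the key ingredient being that a Lipschitz function's derivative vanishes a.e.\ on its level sets (which the paper cites from Evans--Gariepy and you justify via a density-point argument). The only cosmetic difference is that the paper applies this fact to $(\ReLU\circ f)_i$ on the whole set $\{f_i\le 0\}$ at once, whereas you split off the open part $\{f_i<0\}$ and invoke it only on $\{f_i=0\}$.
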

\begin{proof}
Let 
$v\colon\R^d\to\R^N$ be a locally Lipschitz continuous function, define $w:=\ReLU\circ\,v$, and 
\begin{align}
    L_i:=\{x\in\R^d\colon w_i(x)=0\}=\{x\in\R^d\colon v_i(x)\leq 0\}.
\end{align}
We now use an observation about differentiability on level sets (see e.g. \cite[Thm 3.3(i)]{Evans2015MeasureEdition}), which states that
\begin{align}\label{eq:Evans}
    (\der w_i)(x)=0 \quad\text{for almost every } x\in L_i.
\end{align}
As $w_i(x)=v_i(x)$ for every $x\in\R^d\backslash L_i$, we get \ae
\begin{align}
    \der w_i=\1_{\R^d\backslash L_i}\cdot\der v_i=\1_{(0,\infty)}(v_i)\cdot\der v_i
\end{align}
and consequently
\begin{align}\label{eq:ReLUrule}
    \der(\ReLU\circ\, v)=H(v)\cdot\der v.
\end{align}
The claim follows by induction over the layers $K=1,\dots,L$ of $\Phi$, using \eqref{eq:ReLUrule} with $v=\mathcal{R}_K\Phi$ for the induction step. 
\end{proof}

Note that even for convex $\r\Phi$ the values of $\nder \Phi$ on the nullset do not necessarily lie in the respective subdifferentials of $\r\Phi$, as can be seen in Figure~\ref{fig:square}. Although Theorem~\ref{thm:main} holds regardless of which value is chosen for the derivative of $\max\{0,\,\cdot\,\}$ at the origin, no choice will guarantee that all values of $\nder \Phi$ lie in the respective subdifferentials of $\r\Phi$. 
Here we have set the derivative at the origin to zero, following the convention of software implementations for deep learning applications, e.g. TensorFlow and PyTorch.
Using \eqref{eq:concDef} and \eqref{eq:NNderDef} one can verify by direct computation that $\nder$ obeys the chain rule.

\begin{corollary}\label{thm:chain_rule}
It holds for every $x\in\R^d$ that
\begin{align}\label{eq:chainrule}
    ( \nder(\Psi\odot\Phi) )(x)=( \nder\Psi )( \r\Phi(x) )\cdot (\nder\Phi)(x).
\end{align}
\end{corollary}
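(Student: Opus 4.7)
The plan is to prove \eqref{eq:chainrule} by direct computation from the definitions \eqref{eq:concDef} and \eqref{eq:NNderDef}. The key idea is to first identify the partial realizations $\mathcal{R}_k(\Psi\odot\Phi)$ in terms of those of $\Phi$ and $\Psi$, and then split the long matrix product defining $\nder(\Psi\odot\Phi)$ at the layer where the two networks are glued together.

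More concretely, writing $\Psi\odot\Phi=((C_k,c_k))_{k=1}^{L+\tilde{L}-1}$, the definition \eqref{eq:concDef} gives $(C_k,c_k)=(A_k,b_k)$ for $k\leq L-1$, the merged middle layer $(C_L,c_L)=(\tilde{A}_1A_L,\,\tilde{A}_1b_L+\tilde{b}_1)$, and $(C_{L+j},c_{L+j})=(\tilde{A}_{j+1},\tilde{b}_{j+1})$ for $j=1,\dots,\tilde{L}-1$. I would first establish by induction on $k$ that $\mathcal{R}_k(\Psi\odot\Phi)=\mathcal{R}_k\Phi$ for $1\leq k\leq L-1$ and $\mathcal{R}_k(\Psi\odot\Phi)=(\mathcal{R}_{k-L+1}\Psi)\circ\r\Phi$ for $L\leq k\leq L+\tilde{L}-1$. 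The base case $k=L$ rests on the simple factorization $\tilde{A}_1A_Ly+\tilde{A}_1b_L+\tilde{b}_1=\tilde{A}_1(A_Ly+b_L)+\tilde{b}_1$, which identifies the merged affine map at layer $L$ with $\mathcal{R}_1\Psi\circ\r\Phi$; the induction step for $k>L$ is a single application of the recurrence $\mathcal{R}_{k+1}(\Psi\odot\Phi)=W^{\Psi\odot\Phi}_{k+1}\circ\ReLU\circ\mathcal{R}_k(\Psi\odot\Phi)$.

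Next, I would expand $(\nder(\Psi\odot\Phi))(x)$ via \eqref{eq:NNderDef} as the long alternating product $C_{L+\tilde{L}-1}\cdot H(\mathcal{R}_{L+\tilde{L}-2}(\Psi\odot\Phi))\cdots H(\mathcal{R}_1(\Psi\odot\Phi))\cdot C_1$, substitute the identifications above, and split the product at the $L$-th factor. Writing $C_L=\tilde{A}_1\cdot A_L$, the piece from $A_L$ rightwards is exactly $(\nder\Phi)(x)$, and the piece from $\tilde{A}_{\tilde{L}}$ down to and including $\tilde{A}_1$ is exactly $(\nder\Psi)(\r\Phi(x))$; multiplying the two pieces yields \eqref{eq:chainrule}.

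The main obstacle is nothing deep, merely the index bookkeeping at the seam. What makes the splitting go through cleanly is precisely that the merged layer $(C_L,c_L)$ contains $\tilde{A}_1A_L$ with no intervening $\ReLU$ (hence no $H$-factor) between $\tilde{A}_1$ and $A_L$; this is exactly the content of definition \eqref{eq:concDef}, and it allows the two halves of the product to decouple. The identity therefore holds pointwise for every $x\in\R^d$, with no almost-everywhere qualification needed.
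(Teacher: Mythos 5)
Your proposal is correct and is precisely the ``direct computation'' from \eqref{eq:concDef} and \eqref{eq:NNderDef} that the paper invokes without writing out: identifying the partial realizations of $\Psi\odot\Phi$ and splitting the alternating matrix product at the merged layer, where the absence of an intervening $H$-factor between $\tilde{A}_1$ and $A_L$ is exactly what makes the product decouple. No gap; this matches the paper's (implicit) argument.
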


Note that \eqref{eq:chainrule} is well-defined as $\nder\Psi$ exists everywhere, although it only coincides with $\der(\r\Psi)$ almost everywhere. Theorem~\ref{thm:main} however guarantees that we still have \ae
\begin{align}
    \nder(\Psi\odot\Phi)=\der(\r(\Psi\odot\Phi))=\der(\r\Psi\circ\r\Phi).
\end{align}
Next we provide a technical result dealing with the stability of our chain rule, which will prove to be useful in Section~\ref{sec:approx}.

\begin{lemma}\label{lem:stab}
It holds for almost every $x\in\R^d$ that
\begin{align}
    \lim_{y\to\r\Phi(x)}\big[(\nder\Psi)(y)-(\nder\Psi)(\r{\Phi}(x))\big]\cdot (\nder\Phi)(x)=0.
\end{align}
\end{lemma}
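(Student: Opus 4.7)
My plan rests on the observation that $\nder\Psi(y)$, being a product of the fixed matrices $\tilde A_k$ and the diagonal indicator matrices $H(\mathcal{R}_k\Psi(y))$, depends on $y$ only through the sign pattern
\begin{align*}
\sigma(y):=\bigl(\1_{(0,\infty)}((\mathcal{R}_k\Psi)(y)_i)\bigr)_{k,i}
\end{align*}
and therefore takes only finitely many matrix values, being locally constant wherever no coordinate $(\mathcal{R}_k\Psi)_i$ vanishes. If $\tilde y:=\r{\Phi}(x)$ avoids every switching surface $\{y:(\mathcal{R}_k\Psi)(y)_i=0\}$, then $\nder\Psi$ is continuous at $\tilde y$ and the limit is trivial. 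Otherwise, for $y$ in a small neighborhood of $\tilde y$ the pattern $\sigma(y)$ can differ from $\sigma(\tilde y)$ only at \emph{flippable} coordinates $(k,i)$ satisfying $(\mathcal{R}_k\Psi)(\tilde y)_i=0$. I would therefore reduce the claim to showing that flipping any single such coordinate contributes a rank-one perturbation whose row is annihilated by $(\nder\Phi)(x)$ for almost every $x$ in the corresponding preimage.

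For each pair $(k,i)$ I would introduce the truncated network $\Psi_k:=((\tilde A_j,\tilde b_j))_{j=1}^k$ so that $\r\Psi_k=\mathcal{R}_k\Psi$; by \eqref{eq:compRealizations} the composition $\mathcal{R}_k\Psi\circ\r{\Phi}=\r(\Psi_k\odot\Phi)$ is then CPL. Applying the level-set fact \eqref{eq:Evans} (already used in the proof of Theorem~\ref{thm:main}) to this composition shows that the $i$-th row of its classical derivative vanishes a.e.\ on $F_{k,i}:=\{x:(\mathcal{R}_k\Psi\circ\r{\Phi})(x)_i=0\}$. Theorem~\ref{thm:main} together with Corollary~\ref{thm:chain_rule} then gives
\begin{align*}
e_i^\top(\nder\Psi_k)(\r{\Phi}(x))(\nder\Phi)(x)=e_i^\top\nder(\Psi_k\odot\Phi)(x)=0
\end{align*}
for a.e.\ $x\in F_{k,i}$, and the union over the finitely many pairs $(k,i)$ is a single Lebesgue null set outside of which I run the remainder of the argument.

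Fix such an $x$, let $y$ be close to $\tilde y$, and enumerate the coordinates at which $\sigma(y)$ and $\sigma(\tilde y)$ disagree as $(k_1,i_1),\ldots,(k_r,i_r)$ ordered by \emph{non-increasing} layer index $k_1\ge\cdots\ge k_r$. Writing $N(\sigma)$ for the value of $\nder\Psi$ at sign pattern $\sigma$ and $\sigma^{(\ell)}$ for the pattern obtained from $\sigma(\tilde y)$ by performing the first $\ell$ flips, telescope
\begin{align*}
\nder\Psi(y)-\nder\Psi(\tilde y)=\sum_{\ell=1}^r\bigl[N(\sigma^{(\ell)})-N(\sigma^{(\ell-1)})\bigr].
\end{align*}
A direct computation shows each summand to be a rank-one matrix $\pm u_\ell v_\ell^\top$ whose row factor $v_\ell^\top$ involves only the layers strictly below $k_\ell$; the point of the non-increasing ordering is that in $\sigma^{(\ell-1)}$ those lower-layer patterns still agree with $\sigma(\tilde y)$, forcing $v_\ell^\top=e_{i_\ell}^\top(\nder\Psi_{k_\ell})(\tilde y)$. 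By the previous paragraph $v_\ell^\top(\nder\Phi)(x)=0$ for every $\ell$, so $[\nder\Psi(y)-\nder\Psi(\tilde y)](\nder\Phi)(x)=0$ identically for $y$ in a small punctured neighborhood of $\tilde y$, whence the limit.

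The main obstacle I expect is identifying this telescoping order: only the non-increasing layer ordering makes each rank-one row factor coincide with a row of $\nder\Psi_{k_\ell}$ evaluated at the actual point $\tilde y$, which is what Corollary~\ref{thm:chain_rule} turns into $e_{i_\ell}^\top\nder(\Psi_{k_\ell}\odot\Phi)(x)$---the quantity killed by \eqref{eq:Evans}. Any other ordering would evaluate the lower-layer indicators at a fictitious intermediate sign pattern and break the reduction to the level-set vanishing fact.
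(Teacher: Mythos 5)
Your argument is correct, and it rests on exactly the two ingredients that drive the paper's proof: continuity of the indicator entries of $H(\mathcal{R}_k\Psi(\cdot))$ at points where the corresponding coordinate is nonzero, and the level-set fact \eqref{eq:Evans} applied to the CPL composition $\mathcal{R}_k\Psi\circ\r\Phi=\r(\Psi_k\odot\Phi)$, which via Theorem~\ref{thm:main} and Corollary~\ref{thm:chain_rule} forces $e_i^\top(\nder\Psi_k)(\r\Phi(x))\,(\nder\Phi)(x)=0$ for almost every $x$ in your set $F_{k,i}$. The difference is organizational rather than substantive: the paper proves the one-layer limit statement \eqref{eq:induction} with $u=\mathcal{R}_K\Psi$ and then inducts over the layers of $\Psi$ (which, unrolled, uses the very same identity $(\nder\Psi_K)(\r\Phi(x))(\nder\Phi)(x)=(\der(\mathcal{R}_K\Psi\circ\r\Phi))(x)$ a.e.), whereas you exploit that $\nder\Psi$ is determined by the finite sign pattern and telescope $\nder\Psi(y)-\nder\Psi(\r\Phi(x))$ over individual sign flips, ordered by non-increasing layer precisely so that each rank-one row factor is evaluated at the true lower-layer pattern of $\r\Phi(x)$; your handling of ties within a layer is also sound, since the row factor only involves layers strictly below $k_\ell$. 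What your version buys is a slightly sharper conclusion --- the product vanishes identically for all $y$ in a small punctured neighborhood of $\r\Phi(x)$, not merely in the limit --- together with an explicit identification of the offending rank-one directions; what the paper's induction buys is brevity and no bookkeeping about flip orderings (though, the indicators being integer-valued, its limits are in fact also eventually attained). Both arguments discard the same exceptional set, namely a finite union over the pairs $(k,i)$ of Lebesgue null sets.
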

\begin{proof}
    We first show for every locally Lipschitz continuous function $u\colon\R^m\to\R^N$ and for almost every $x\in\R^d$ that
    \small
    \begin{align}\label{eq:induction}
        \lim_{y\to\r\Phi(x)}[H(u(y))-H(u(\r\Phi(x)))]\cdot(\der(u\circ\r\Phi))(x)=0.
    \end{align}
    \normalsize
    If $u_i(\r\Phi(x))\neq0$ we have
    \begin{equation}
        \lim_{y\to\r\Phi(x)}\1_{(0,\infty)}(u_i(y))=\1_{(0,\infty)}(u_i(\r\Phi(x)))
    \end{equation}
    as $u_i$ is continuous and $\1_{(0,\infty)}$ is continuous on $\R\backslash\{0\}$. Furthermore, \cite[Thm 3.3(i)]{Evans2015MeasureEdition} implies that 
    \begin{equation}
        (\der (u_i\circ\r\Phi))(x)=0
    \end{equation}
    for almost every $x\in\R^d$ with $u_i(\r\Phi(x))=0$. Since a finite union of nullsets is again a nullset, this proves the claim \eqref{eq:induction}. The lemma follows by induction over the layers $K=1,\dots,\tilde{L}$ of $\Psi$ and applying \eqref{eq:induction} with $u=\mathcal{R}_K\Psi$.
\end{proof}

\section{General Activation Functions}
As mentioned in the introduction, it is possible to replace the ReLU activation function in~\eqref{eq:Rdef} by some locally Lipschitz continuous, component-wise applied function $\varrho\colon\R\to\R$ with an at most countably large set $S$ of points where $\varrho$ is not differentiable. 
Specifically, one can define the neural network derivative (with activation function $\varrho$) as in~\eqref{eq:NNderDef} with $\mathbbm{1}_{(0,\infty)}(x_i)$ in~\eqref{eq:Hdef} replaced by 
\begin{equation}
    (\bar{\der} \varrho)(x_i):=\begin{cases} 0, & x_i\in S
    \\ (\der \varrho)(x_i), & \text{else}\end{cases}.
\end{equation}
The chain rule can, again, be checked by direct computation and it is straightforward to adapt Theorem~\ref{thm:main} to this more general setting by considering the level sets \begin{equation}
   \{x\in\R^d\colon w_i(x)=s\},\quad s\in S. 
\end{equation}
If additionally $\bar{\der} \varrho$ is continuous on $\R\setminus S$, the proof of Lemma~\ref{lem:stab} translates without any modifications. 

\section{Utilization in Approximation Theory} \label{sec:approx}
These results can now be employed to bound the $L^\infty$-norm of $\nder(\Psi\circ\Phi)-\der(u\circ \,v)$, given corresponding estimates for the approximation of $u$ and $v$ by $\Psi$ and $\Phi$, 
respectively. 
Here, one has to take some care when bounding the term
\begin{equation}
    \left\|\left[\nder \Psi \circ \r{\Phi}-\der u \circ \r{\Phi}\right]\nder \Phi\right\|_{L^\infty}
\end{equation}
by
\begin{equation}
   \|\nder \Psi -\der u \|_{L^\infty} \| \nder \Phi\|_{L^\infty}.
\end{equation}
Again it can happen that $\r{\Phi}$ maps a set of positive measure into a nullset where the estimate for the approximation of $\der u$ by $\nder \Psi$ in the \emph{essential} supremum norm is not valid. However, using the stability result in Lemma~\ref{lem:stab} one can for almost every $x\in\R^d$ shift to a sufficiently close point $y\approx \r{\Phi}(x)$ where the estimate holds. 
In \cite{Yarotsky2018} Yarotsky explicitly constructs networks whose realization is a linear interpolation\footnote{The interpolation points are uniformly distributed over the domain of approximation and their number grows exponentially with the size of the networks.} of the squaring function (see Fig.~\ref{fig:square} for illustration), which directly gives an estimate on the approximation rate for the derivatives.
These simple networks can then be combined to get networks approximating multiplication, polynomials and eventually, by means of e.g. local Taylor approximation, functions $f$ whose first $n\geq 1$ (weak) derivatives are bounded. This leads to estimates of the form
\begin{equation} \label{eq:est_B}
    \|f - \r\Phi_{\eps,B}\|_{L^\infty (I_B)} \le \eps,
\end{equation}
with $I_B=[-B,B]^d$,
including estimates for the scaling of the size of the network $\Phi_{\eps,B}$ w.r.t. $B$ and $\eps$.
As these constructions are based on composing simpler functions with known estimates one can now employ Theorem~\ref{thm:main} and Corollary~\ref{thm:chain_rule} to show that the derivatives of those networks also approximate the derivative of the function, i.e. 
\begin{equation}\label{eq:der_est_B}
     \|\der f - \nder \Phi_{\eps,B} \|_{L^\infty (I_B)} \le c \, \eps^{r}.
\end{equation} 
Such constructive approaches can further be found in~\cite{Elbrachter2018DNNPricing}, in \cite{Petersen2017} for $\beta$-cartoon-like functions, in \cite{SZ19_2592} for $(\bm{b},\eps)$-holomorphic maps, and in \cite{Grohs2019DeepTheory} for high-frequent sinusoidal functions. 
\begin{figure}
    \centering
    \includegraphics[width=0.9\linewidth]{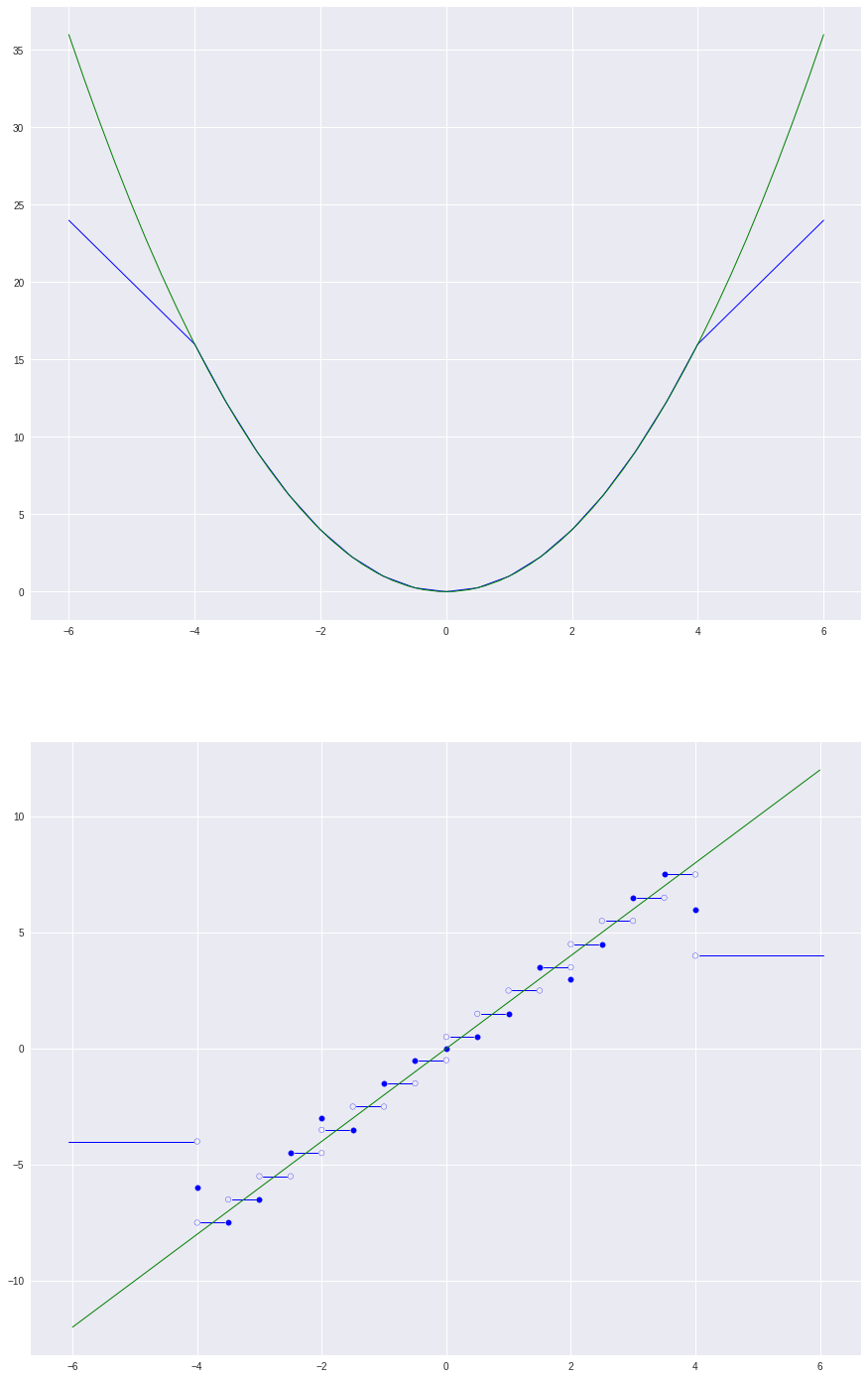}
    \caption{Approximation of the function $x\mapsto x^2$ and its derivative on the interval $[-4,4]$ by a neural network $\Phi$ with depth $6$, connectivity $52$ and weight bound $4$. Note that not all values of $\nder\Phi$ at the points of non-differentiablity of $\r\Phi$ lie between the values at either side, i.e. in the subdifferential.}
    \label{fig:square}
\end{figure} 

\section{Global Error Estimates}
The error estimates above are usually only sensible for bounded domains, as the realization of a neural network is always CPL with a finite number of pieces.
We briefly discuss a general way of transforming them into global pointwise error estimates, which can be useful in the context of PDEs (see e.g. \cite{grohs2018approx,Jentzen2018ACoefficients}).
In the following assume that we have a function $f$ with an at most polynomially growing derivative, i.e.
\begin{equation}
    \|(\der f)(x)\|_2 \le \bm{c} (1+\|x\|_2^{\kappa}).
\end{equation} 
Denote by $\Phi_B^{\operatorname{char}}$ a neural network which represents the $d$-dimensional approximate characteristic function of $I_B$, i.e. $\r{\Phi_B^{\operatorname{char}}}(x)\in [0,1]$ and 
\begin{align}\begin{split}
&\r{\Phi_B^{\operatorname{char}}}(x)=1,\quad  x\in I_B,\\ 
&\r{\Phi_B^{\operatorname{char}}}(x)=0,\quad
x\notin I_{B+1}.
\end{split}\end{align}
See \cite[Proof of Thm. VIII.3]{Grohs2019DeepTheory} for such a construction.
Further let $\Phi_{\eps,b}^{\operatorname{mult}}$ be the neural network approximating the multiplication function on $[-b,b]^2$ with error $\eps$ (see e.g.~\cite[Prop. 3.1]{SZ19_2592}).\\
Now we define the global approximation networks $\Phi_\eps$ as the composition of $\Phi_{\eps/2,b_\eps}^{\operatorname{mult}}$ with the parallelization of $\Phi_{B_\eps}^{\operatorname{char}}$ and $\Phi_{\eps/2,B_\eps+1}$ for suitable 
\begin{equation}
    B_\eps\in \mathcal{O}(\eps^{-1}) \quad \text{and} \quad b_\eps\in \mathcal{O}(\eps^{-\kappa-1}).
\end{equation}
See Figure~\ref{fig:proof_glob} for an illustration and e.g.~\cite[Def. 2.7]{Petersen2017} for a formal definition of parallelization.
Considering the errors on $I_B$, $I_{B+1}\backslash I_{B}$ and $\R^d\backslash I_{B+1}$ leads to global estimates, i.e. for every $x\in\R^d$
\begin{equation} 
|f(x)-\r{\Phi_\eps}(x)|\le \eps(1+\|x\|^{\kappa+2}_2) 
\end{equation}
and, by use of the chain rule~\ref{thm:chain_rule}, for almost every $x\in\R^d$
\begin{equation} \label{eq:glob_der}
    \|(\der f)(x) -(\nder\Phi_\eps)(x)\|_2 \le C\eps^r(1+\|x\|_2^{\kappa+2}).
\end{equation}
Due to the logarithmic size scaling of the multiplication network, the size of $\Phi_\eps$ can be bounded by the size of $\Phi_{\eps/2,B_\eps+1}$ plus an additional term in $\mathcal{O}(d+\kappa\log\eps^{-1})$.
\begin{figure}
\centering
\begin{tikzpicture}[node distance=1.5cm,>=stealth',on grid]
  \tikzstyle{place}=[rectangle,rounded corners,thick,draw=blue!75,fill=blue!20,minimum size=5mm,text centered]
  \tikzstyle{transition}=[rectangle,thick,draw=black!75,
  			  fill=black!20,minimum size=4mm]
  \tikzset{mystyle/.style={->}} 
    \node[place](x)  {\scriptsize $ x$};
    \coordinate[right of=x, node distance=3.3cm](center);
    \node[place,below of=center, node distance=1cm](phi) {\scriptsize $=\r{\Phi_{\eps/2,B_\eps+1}}(x)$};
    \node[place,above of=center, node distance=1cm](char) {\scriptsize $ \approx \mathbbm{1}_{[-B_\eps,B_\eps]^d}(x)$};
    \node[transition, right of=center, node distance=1.5cm](mult) {\scriptsize $  \Phi_{\eps/2,b_\eps}^{\operatorname{mult}}$};
    \node[place,right of=mult, node distance=2.2cm](final) {\scriptsize $ \approx \mathbbm{1}_{[-B_\eps,B_\eps]^d}f(x)$};
    \path (x) edge[mystyle] node[transition](par) {\scriptsize $ \Phi_{B_\eps}^{\operatorname{char}}$} (char.west)
          (x) edge[mystyle] node[transition](par) {\scriptsize $ \Phi_{\eps/2,B_\eps+1}$} (phi.west)
          (phi.east) edge (mult)
          (char.east) edge (mult)
          (mult) edge[mystyle] (final);
\end{tikzpicture}
\caption{The neural networks $\Phi_\eps$ approximating $f$ globally.}
\label{fig:proof_glob}
\end{figure}
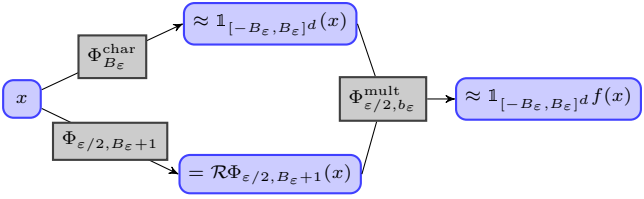
\section{Application to PDEs}
Analyzing the regularity properties of neural networks was motivated by the recent successful application of deep learning methods to PDEs~\cite{E2017,2017Jentzen,SirignanoSpiliopoulos2017,FujiiTakahashiTakahashi2017,KhooLuYing2017,EYu2017,Hutzen2019}. Initiated by empirical experiments~\cite{beckbecker2018} it has been proven that neural networks are capable of overcoming the curse of dimensionality for solving so-called Kolmogorov PDEs~\cite{Berner2018AnalysisEquations}. More precisely, the solution to the empirical risk minimization problem over a class of neural networks approximates the solution of the PDE up to error $\eps$ with high probability and with size of the networks and number of samples scaling only polynomially in the dimension $d$ and $\eps^{-1}$.
The above requires a suitable learning problem and a sufficiently good approximation of the solution function by neural networks. For Kolmogorov PDEs, this boils down to calculating global Lipschitz coefficients and error estimates for neural networks approximating the initial condition and coefficient functions (see e.g.~\cite{grohs2018approx,Jentzen2018ACoefficients}). 
Employing estimates of the form~\eqref{eq:der_est_B} one can bound the derivative on $I_B$, i.e. 
\begin{equation} \label{eq:lip_const}
\begin{split}
    L_B:=\|\nder \Phi_{\eps,B}\|_{L^\infty(I_B)} 
    \le \|\der f\|_{L^\infty(I_B)} +c\eps^r.
\end{split}
\end{equation}
Using mollification and the mean value theorem we can establish local Lipschitz estimates, i.e. for all $x,y\in (-B,B)^d$ that
\begin{equation}\label{eq:lip_bound}
        |\r{\Phi_{\eps,B}}(x)-\r{\Phi_{\eps,B}}(y)| \le L_B \|x-y\|_2,
\end{equation}
and corresponding linear growth bounds 
\begin{equation}
        |\r{\Phi_{\eps,B}}(x)| \le \big(|\r{\Phi_{\eps,B}}(0)|+L_B \big)(1+ \|x\|_2).
\end{equation}
Similarly, one can use~\eqref{eq:glob_der} to obtain
estimates of the form
\begin{equation}
   |\r{\Phi_\eps}(x)-\r{\Phi_\eps}(y)|\le \bm{C}(1+\|x\|^{\kappa+2}_2+\|y\|^{\kappa+2}_2)  \|x-y\|_2
\end{equation}
for all $x,y\in\R^d$ (which are demanded in \cite[Theorem 1.1]{Jentzen2018ACoefficients}). 
Moreover, note that the capability to produce approximation results which include error estimates for the derivative is of significant independent interest. 
Various numerical methods (for instance Galerkin methods) rely on bounding the error in some Sobolev norm $\|\cdot\|_{W^{1,p}}$, which requires estimates of the derivative differences. 
We believe that the possibility to obtain regularity estimates significantly contributes to the mathematical theory of neural networks and allows for further advances in the numerical approximation of high dimensional partial differential equations. 
\section{Relation to backpropagation in training}
The approach discussed here could further be applied to the training of neural networks by (stochastic) gradient descent.
Note, however, that this is a slightly different setting. From the approximation theory perspective we were interested in the derivative of $x\mapsto \r\Phi(x)$, while in training one requires the derivative of $\Phi\mapsto\r\Phi(x^*)$ for some fixed sample $x^*$. In particular this function is no longer CPL but rather continuous piecewise polynomial. While this would necessitate some technical modifications, we believe that it should be possible to employ the method used here in order to show that the gradient of $\Phi\mapsto\r\Phi(x^*)$ coincides \ae with what is computed by backpropagation using the convention of setting the derivative of $\max\{0,\cdot\}$ to zero at the origin (as well as similar conventions for e.g. max-pooling). 

\section*{Acknowledgment}
The research of JB and DE was supported by the Austrian Science Fund (FWF) under
grants I3403-N32 and P 30148.

\bibliographystyle{IEEEtran}
\bibliography{IEEEabrv,bib}
\end{document}